\pdfoutput=1

\documentclass[11pt]{article}

\usepackage[preprint]{acl}

\usepackage{times}
\usepackage{latexsym}

\usepackage[T1]{fontenc}

\usepackage[utf8]{inputenc}

\usepackage{microtype}
\usepackage{inconsolata}
\usepackage{graphicx}

\DeclareMathSymbol{:}{\mathord}{operators}{"3A}

\title{ProcrustesGPT: Compressing LLMs with Structured Matrices and Orthogonal Transformations
}

\author{Ekaterina Grishina,  Mikhail Gorbunov, Maxim Rakhuba \\
 HSE University \\
  \small{
    Correspondence: \texttt{er.grishina@yandex.ru} 
    }
  }

\usepackage{amsmath}
\usepackage{amssymb}
\usepackage{mathtools}
\usepackage{amsthm}
\usepackage{multirow}
\usepackage{algorithm}
\usepackage{algpseudocode}
\usepackage{booktabs}

\theoremstyle{plain}
\newtheorem{theorem}{Theorem}[section]
\newtheorem{proposition}[theorem]{Proposition}

\theoremstyle{definition}
\newtheorem{definition}[theorem]{Definition}

\theoremstyle{remark}

\DeclareMathOperator*{\argmin}{arg \, min}

\DeclareMathOperator*{\Q}{\mathcal{Q}}

\DeclareMathOperator*{\Si}{\mathcal{S}_{in}}
\DeclareMathOperator*{\So}{\mathcal{S}_{out}}
\DeclareMathOperator*{\Ss}{\mathcal{S}_{skip}}

\begin{document}
\maketitle
\begin{abstract}

Large language models (LLMs) demonstrate impressive results in natural language processing tasks but require a significant amount of computational and memory resources. Structured matrix representations are a promising way for reducing the number of parameters of these models. However, it seems unrealistic to expect that weight matrices of pretrained models can be accurately represented by structured matrices without any fine-tuning.
To overcome this issue, we utilize the fact that LLM output is invariant under certain orthogonal transformations of weight matrices.
This insight can be leveraged to identify transformations that significantly improve the compressibility of weights within structured classes.
The proposed approach is applicable to various types of structured matrices that support efficient projection operations. Code is available at: \url{https://github.com/GrishKate/ProcrustesGPT}.
\end{abstract}

\section{Introduction}
Large language models have achieved remarkable success in language processing and are widely used in a variety of applications, but their deployment is still challenging, as these models hardly fit into a single GPU and require much computational resources during the inference and training processes. The research community is actively seeking efficient algorithms to reduce the size of pretrained models without sacrificing accuracy. 

One approach that has not been fully explored is the use of structured matrices, which can potentially not only reduce the number of parameters in the model but also speed up computations. Low-parametric matrix decomposition can be applied directly to the weight matrices, minimizing the difference between the original and decomposed weights, i.e., $\|W - W'\| \to \min_{W'\in\mathcal{S}}$, where $\mathcal{S}$ is the low-parametric matrix class. Such a factorization may appear to be a reasonable initial guess for further fine-tuning.
Unfortunately, it often leads to high approximation errors for any $\mathcal{S}$ when no additional training is performed, as there are no restrictions on the structure of the weight matrices during pretraining.

To overcome this issue, we utilize invariance of the network output under certain orthogonal transformations, which was first observed in~\cite{ashkboos2024slicegpt}.
In particular, for each layer, we aim to find such transformations that lead to the best compressibility within the chosen matrix class $\mathcal{S}$.
In our paper, we focus on the sum of Kronecker product representation and $\mathcal{GS}$-matrices~\cite{group_and_shuffle,dao2022monarch}, but other representations are also possible.
Finding an optimal orthogonal transformation is a known linear algebra problem and is called the \emph{orthogonal Procrustes problem}.
The resulting framework is formulated as an optimization problem on the weights of the pretrained network, is free from the need for fine-tuning and applicable for different structured matrix representations.
In Figure~\ref{fig:kron_vs_gs}, we present relative errors in the Frobenius norm for different layers with and without using orthogonal transformations.
We observe a noticeable increase in compression ability thanks to optimally chosen orthogonal transformations.

\begin{figure*}[t]
  \includegraphics[width=1.0\linewidth]{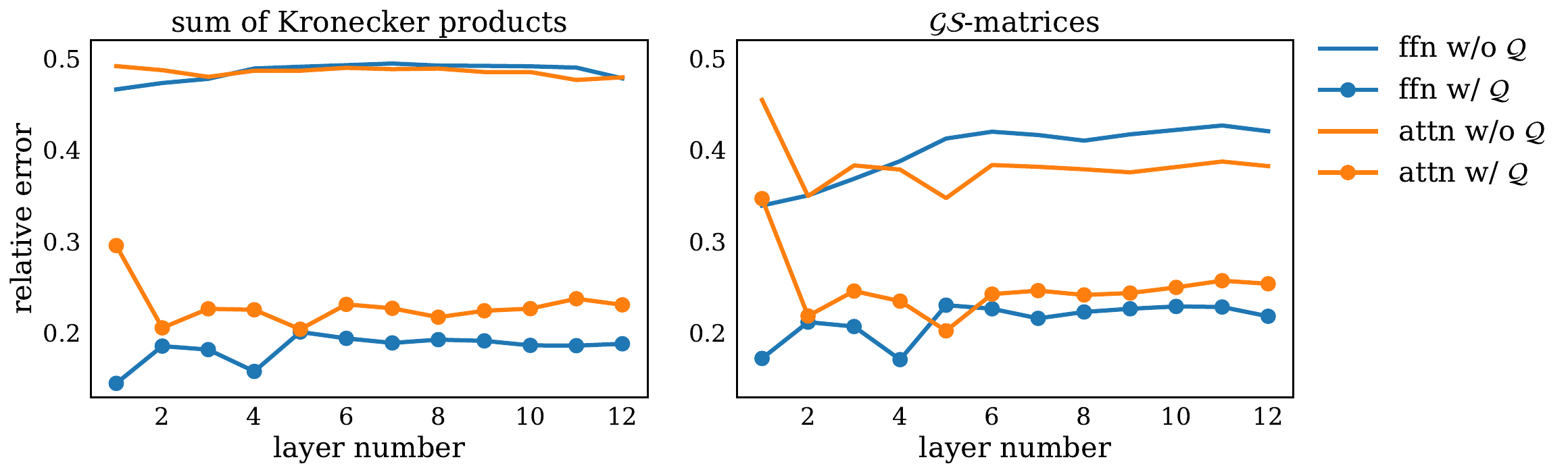} 
  \caption { Illustration of compressibility of different layers of OPT-125m with and without applying orthogonal transformations $\mathcal{Q}$. We consider two types of structured representations: sum of Kronecker products (Section~\ref{sec:kron}) and $\mathcal{GS}$-matrices (Section~\ref{sec:gs}). Both representations result in approximately 25\% compression for each compressed matrix within the layers.} \label{fig:kron_vs_gs}
\end{figure*}

The main contributions of our work:
\begin{itemize}
\item We propose a new fine-tuning free framework for compressing LLM weights. The framework utilizes orthogonal transformations of the network weight matrices to ensure their compressibility using a low-parametric representation of choice.
\item We formulate this framework as an optimization problem and develop efficient numerical methods to solve it. 
\item 
We conduct experiments with OPT \cite{zhang2022opt} and Llama2 \cite{touvron2023llama} models. We show that in most scenarios, our approach yields more accurate results than alternative fine-tuning-free methods at comparable compression rates (in the range from 14\% to 36\%).
\end{itemize}

\section{Related work}

Existing approaches to neural network compression can be divided into four categories: quantization, distillation, pruning, and matrix factorization. Matrix factorization is a promising and relatively underexplored technique, and the most widely used approach within this category is a low-rank approximation. The work \cite{sharma2023laser} has shown that by carefully selecting ranks for each of the weights, the performance of LLMs can be improved on some tasks.
However, direct application of SVD to uniform compression  of weights leads to poor performance because weights usually have high ranks \cite{yu2023features_low_rank}. Instead, the authors \cite{wang2024svd, hsu2022fwsvd, yuan2023asvd, yu2023features_low_rank, ji2024bayesian_svd, chen2021drone} use calibration datasets or Fisher information matrix to approximate activations. Other works explore low-rank plus sparse \cite{li2023losparse} and low-rank plus quantized decompositions \cite{saha2024caldera}.

The works \cite{tahaei2021kronecker_bert,edalati2021kronecker_gpt} have been among the first to apply the Kronecker-product decomposition for the compression of BERT and GPT2. The paper \cite{abronin2024tqcompressor} proposes to enhance the performance of the Kronecker decomposition by adding permutations. These works compress the weight matrices directly and require knowledge distillation or fine-tuning in order to recover performance.

The authors of ModeGPT \cite{lin2024modegpt} propose to split the weight matrices of the transformer into pairs and jointly compress them using SVD, CR or Nystr\"om approximation. This method is training-free and efficiently preserves model performance; however, it does not address compression of embedding and head matrices. 

Many studies have investigated embedding compression, but most of the proposed algorithms require additional training. 
The works \cite{xu2023tensorgpt, hrinchuk2020tensorized} apply the Tensor Train decomposition to the embedding layer, which offers strong compression, but requires training from scratch or hinders model performance. Other methods \cite{lioutas2020distilled, acharya2019online} approximate embedding with low-rank decomposition, but these methods require fine-tuning. GroupReduce \cite{chen2018groupreduce} utilizes knowledge about words occurrence by weighing tokens with their frequencies and applies block low-rank approximation.

Our goal in this study is to maintain model performance after weight factorization without additional training. The authors of the prunning approach SliceGPT \cite{ashkboos2024slicegpt} have introduced the concept of computational invariance, meaning that orthogonal transformations can be applied to transformer weights without changing the output of the model. SliceGPT uses invariance to project layer activations onto their principal components and remove columns or rows from weight matrices. In this work, we utilize computational invariance to find a better approximation of the weights with structured representations.

\section{Our approach}

Our approach consists in iteratively finding orthogonal transformations $\Q$ to obtain optimal compression properties of the matrix weights.
For brevity, we call these orthogonal transformations ``rotations'', although they do not necessarily have determinants equal to one.
In Section~\ref{sec:rot}, we present the concept of rotational invariance.
Then, in Section~\ref{sec:opt}, we present our approach as an optimization formulation to be solved.
Notably, SliceGPT~\cite{ashkboos2024slicegpt} appears to be a particular instance of this formulation, which we also discuss further in Section~\ref{sec:slicegpt}.

\subsection{Rotational invariance}\label{sec:rot}

\begin{figure*}[t]
  \includegraphics[width=1.0\linewidth]{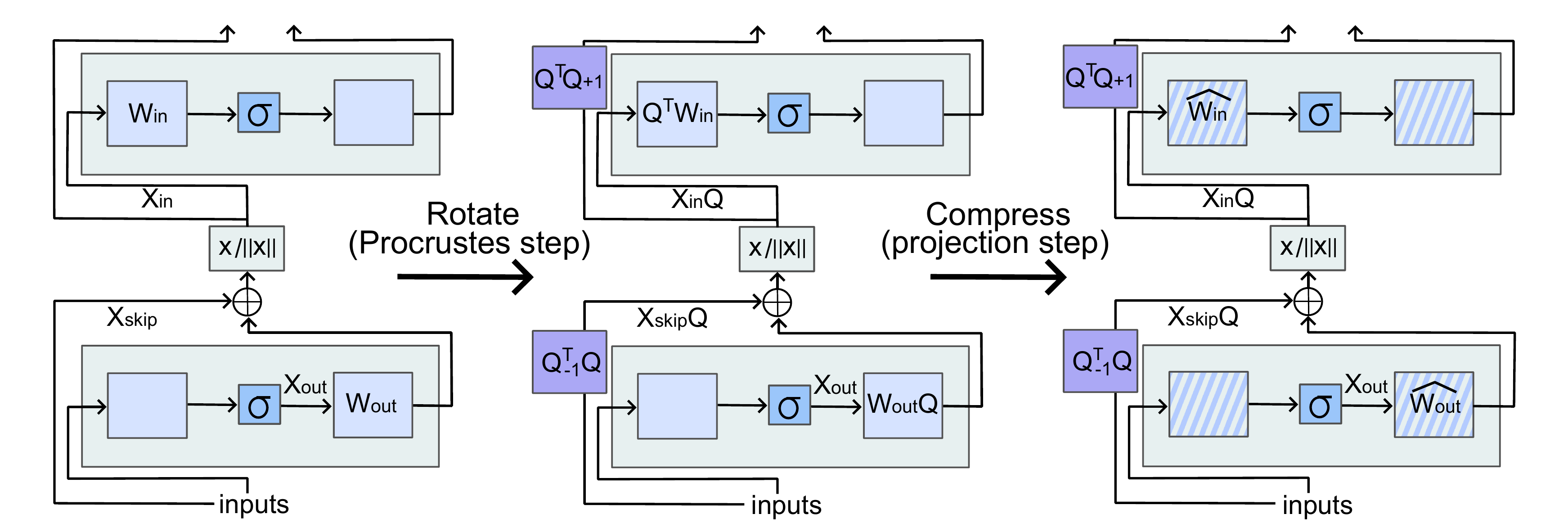} 
  \caption { Illustration of the process of compression of a single transformer layer.} \label{fig:compression}
\end{figure*}

In this section, we introduce notation and explain the concept of rotational invariance~\cite{ashkboos2024slicegpt}. The transformer architecture consists of the repeated series of multi-head self-attention (MHSA) blocks and feed-forward network (FFN) blocks. Between these blocks, there is the LayerNorm or RMSNorm block. The RMSNorm normalizes the input vector: 
\[
    \mathrm{RMSNorm}(x) = \frac{x}{\|x\|_2}. 
\]
LayerNorm is a linear transformation of RMSNorm and a network with LayerNorm can be easily transformed to a network with RMSNorm~\cite{ashkboos2024slicegpt}. 

Each of the MHSA and FFN blocks consists of input linear mappings, an activation function, and an output linear mapping. For example, the MHSA block first obtains queries, keys and values ($XW_q, XW_k, XW_v$) through a linear projection of the input $X$. Then these matrices are nonlinearly transformed and multiplied by an output weight matrix ($W_o$). If we denote the stacked weight matrices of the input linear mappings of each block as $W_{in}$ and the weight matrix of the output linear mapping as $W_{out}$, then we can write:
\[\begin{aligned}\text{MHSA}(X)&=\sigma(X[W_q, W_k, W_v])W_{o}+b=\\ &=\sigma(XW_{in})W_{out}+b,\end{aligned}\]
where $\sigma$ is multi-head attention operation. Similarly, the FFN block can be written as
\[
\text{FFN}(X)=\sigma(XW_{in}+b_{in})W_{out}+b_{out},
\]
where $\sigma$ denotes an element-wise activation function, e.g. ReLU.

Let $Q$ be an orthogonal matrix, which is a square matrix satisfying $Q^T Q = I$. It is well known that the Frobenius norm of a matrix $\|X\|_F^2 = \sum_{i,j} x_{ij}^2$ is invariant under orthogonal transformations of $X$, i.e., for any orthogonal $Q_1$ and $Q_2$:
\begin{equation}\label{eq:unitary-invariance}
    \|Q_1 X Q_2 \|_F = \|X\|_F.
\end{equation}
Using this invariance and $Q^T Q = I$, we may write
\begin{equation}\label{eq:rmsnorm}
\begin{aligned}& 
\left(\frac{XW_{out} + X_{skip}}{\|XW_{out}+X_{skip}\|_F}\right)W_{in} = \\&=
\left(\frac{XW_{out}Q+X_{skip}Q}{\|XW_{out}Q+X_{skip}Q\|_F}\right)(Q^TW_{in}),
\end{aligned}
\end{equation}
where $X_{skip}$ comes from the skip connection. As a result, we modify the skip connections to apply $Q$ to the input of the RMSNorm ($X_{skip}$) and $Q^T$ to the output of RMSNorm (the part of \eqref{eq:rmsnorm} in brackets) to keep the  model unchanged.

\subsection{Optimization problem formulation} \label{sec:opt}

We aim to improve the compression of the model weights via structured matrices by utilizing rotational invariance. 
In essence, for a given structure of layers, we want to find the rotations that complement well with the chosen structure, then rotate the network and project rotated weights on the structured matrix layers, with as little degradation of performance as possible. 
For each layer, our objective is thus minimizing the L2-difference between the outputs of the rotated network and the outputs of the compressed network on the calibration dataset.

For the ease of presentation and prior to delving into a detailed motivation, let us first articulate the final formulation of the optimization problem for the $\ell$-th layer:
\begin{flalign}
\label{eq:optimization_problem}
    & \|X_{out}^{\ell} (W_{out}^{\ell} Q_{\ell} - \widehat{W}_{out}^{\ell}) \|_F^2 + &&\\ 
    & +\nonumber \lambda_{in}\| X_{in}^{\ell} (W_{in}^{\ell} - Q_{\ell} \widehat{W}_{in}^{\ell}) \|_F^2 \to 
    \min_{\substack{
        Q_\ell^TQ_\ell = I, \widehat{W}_{\alpha}^{\ell} \in \mathcal{S}_{\alpha}, \\ \alpha\in\{in,\, out\},
        }}
\end{flalign}
where $\mathcal{S}_{in}$ and $\mathcal{S}_{out}$ are structured matrix classes that are utilized for compression.
Despite the seeming simplicity, it is a nontrivial nonconvex optimization problem.
We propose to approach it by alternatingly optimizing between $Q_{\ell}$ and $\widehat{W}_{in}^{\ell}, \widehat{W}_{out}^{\ell}$. 
Individual optimization problems are respectively a Procrustes problem for finding optimal orthogonal matrix and a projection step in a weighted norm for low-parametric representations. 
Importantly, problems for each layer are independent from each other and can be solved in parallel. 
We further discuss how we tackle this optimization problem in Section~\ref{sec:optimization}.

\subsection{Motivating the optimization formulation}

Let us discuss the motivation for~\eqref{eq:optimization_problem} with a more general approach.
Besides giving us motivation, it also uncovers the connection to the pruning approach of~\cite{ashkboos2024slicegpt}. As discussed in Section \ref{sec:rot}, by utilizing rotation invariance, one can transform different layers of the network with different rotations without changing the network outputs. However,  orthogonal matrices arise in skip connections after the ``rotation'' step (see Figure \ref{fig:compression}), and it can be fruitful for future work to additionally compress them.

We denote the weights of the network with the applied set of rotations $\mathcal{Q}=\{Q_1, Q_2, \dots , Q_L\}$ as $W_{in}^{\Q, \ell}, W_{out}^{\Q, \ell}$ and the intermediate corresponding inputs as $X_{in}^{\Q, \ell}$, $X_{out}^{\Q, \ell}, X^{\Q, \ell}_{skip}$ (see Figure \ref{fig:compression}). Let us denote the input of RMSNorm as
\[
    f^{\Q, \ell}_{out}(X_{skip}^{\Q, \ell}) = X_{out}^{\Q, \ell} W_{out}^{\Q, \ell} + X_{skip}^{\Q, \ell} Q_{\ell - 1}^T Q_{\ell},
\]
and the output of the linear mapping following RMSNorm as
\[
    f^{\Q, \ell}_{in} (X_{in}^{\Q, \ell}) = X_{in}^{\Q, \ell} W_{in}^{\Q, \ell}.
\]
For the $l$-th rotation, objective can be written as
\[
\begin{split}
    & \| f^{\Q, \ell}_{out}(X_{skip}^{\Q, \ell}) - \widehat{f}^{\ell}_{out} (X_{skip}^{\Q, \ell}) \|_F^2 +\\ 
    & +\lambda_{in}\| f^{\Q, \ell}_{in}(X_{in}^{\Q, \ell}) - \widehat{f}_{in}^{\ell}(X_{in}^{\Q, \ell}) \|_F^2 \to \min_{\substack{\widehat{f}_{in}^{\ell}, \widehat{f}_{out}^{\ell},\\ Q_{\ell}^T Q_{\ell} = I}}
\end{split}
\]
Utilization of rotations affects matrices $W^{\ell}_{in}, W^{\ell}_{out}$, while also adding extra matrix $Q_{\ell - 1}^T Q_{\ell}$ in the skip connection. We aim to compress rotated weights $W^{\Q, \ell}_{in}, W^{Q, \ell}_{out}$ while also having the possibility of compressing matrices $Q_{\ell - 1}^T Q_\ell $ into a separate structured matrix $\widehat{W}_{skip}^{\ell}$. Then, our objective becomes:
\[
    \begin{split}
    \| &(X_{out}^{\Q, \ell}  W_{out}^{\Q, \ell} + X_{skip}^{\Q, \ell} Q_{\ell - 1}^T Q_\ell) - \\
    &  (X_{out}^{\Q, \ell} \widehat{W}_{out}^{\ell} + X_{skip}^{\Q, \ell} \widehat{W}_{skip}^{\ell} ) \|_F^2 + \\ 
    &  \lambda_{in} \| X_{in}^{\Q, \ell} W_{in}^{\Q, \ell} - X_{in}^{\Q, \ell} \widehat{W}_{in}^{\ell} \|_F^2 \to 
    \min_{\substack{
        Q_\ell^TQ_\ell = I,\, \widehat{W}_{\alpha}^{\ell} \in \mathcal{S}_{\alpha}, \\ \alpha\in\{in, out, skip\} 
        }}
    \end{split}
\]
 Which can be rewritten as
\[
\begin{split}
    & \|X_{out}^{\ell} (W_{out}^{\ell} Q_{\ell} - \widehat{W}_{out}^{\ell}) +  \\
    &  X_{skip}^{\ell}(Q_{\ell} - Q_{\ell-1} \widehat{W}_{skip}^{\ell} ) \|_F^2 + \\ 
    &  \lambda_{in} \| X_{in}^{\ell} (W_{in}^{\ell} - Q_{\ell} \widehat{W}_{in}^{\ell}) \|_F^2 \to 
    \min_{\substack{
        Q_\ell^TQ_\ell = I,\, \widehat{W}_{\alpha}^{\ell} \in \mathcal{S}_{\alpha}, \\ \alpha\in\{in, out, skip\} 
        }}
\end{split}
\]

We can approximate this further by abandoning the compression of $\widehat{W}_{skip}^{\ell}$, setting it to be equal to $Q_{\ell-1}^T Q_{\ell}$. This way we arrive at~\eqref{eq:optimization_problem}, and optimization problem for the $\ell$-th layer becomes independent of the solution for the previous layers. This allows solving problems for different layers in parallel. Additionally, experiments have shown that balancing the terms in~\eqref{eq:optimization_problem} using $\lambda_{in} = \frac{\|X_{out}W_{out}\|^2}{\|X_{in}W_{in}\|^2}$ improves the quality compared to $\lambda_{in}=1$ (see Table~\ref{table:ppl}).

\section{Structured matrix representations}\label{sec:structall}

In this section, we present different structured matrix representations on which we focus in our work. 
The sum of Kronecker products yields the most consistent accuracy gain within different models.
Although the $\mathcal{GS}$-matrix representation resulted in slightly lower accuracy overall, its computationally efficient structure offers significant potential to accelerate inference.
Finally, we examine matrices with zero blocks, revealing connections with pruning techniques~\cite{ashkboos2024slicegpt}.

\subsection{Kronecker products}\label{sec:kron}

\begin{definition}
Given matrices $A \in \mathbb{R}^{m \times n}$ and $ B \in \mathbb{R}^{p \times q}$, the Kronecker product $A \otimes B$ is the $pm \times qn$ block matrix:
\[A \otimes B = \begin{bmatrix}a_{11}B & \dots & a_{1n}B\\ \vdots & \ddots & \vdots \\ a_{m1}B & \dots & a_{mn}B \end{bmatrix}.
\]
\end{definition}
One Kronecker product is a very restrictive structure, and one usually considers the sum of $r>1$ Kronecker products for more accurate results.
Fortunately, the problem of obtaining the best approximation within such a structure (projection operation):
\begin{equation}\label{eq:kronstruct}
    \left\|W-\sum_{i=1}^r A_i\otimes B_i\right\|^2_F \to \min_{A_i, B_i},
\end{equation}
has an analytical solution that can be obtained using SVD~\cite{golub2013matrix}, see Algorithm~\ref{alg:kron}.
For better results, we need to use the weighted Frobenius norm~\eqref{eq:optimization_problem}.
The Kronecker product approximation problem with the weight matrix~$X$ reads as:
\begin{equation}\label{eq:kronstructweight}
    \left\|X\left(W-\sum_{i=1}^r A_i\otimes B_i\right)\right\|^2_F \to \min_{A_i, B_i}.
\end{equation}
Although it does not admit a simple solution, the SVD-based solution of~\eqref{eq:kronstruct} can be used for initialization for the iterative process.
As an iterative procedure, we optimize~\eqref{eq:kronstructweight} alternatively with respect to $\{A_i\}$ and $\{B_i\}$.
Each of the alternating subproblems is solved exactly, details are presented in Appendix~\ref{sec:kron_weighted}.

\begin{algorithm}
\caption{SVD-based solution to~\eqref{eq:kronstruct}.}\label{alg:kron}
\begin{algorithmic}
\State \textbf{Input:} $W\in \mathbb{R}^{mp\times nq}$, rank $r$.
\State \textbf{Output:} $\{A_i\}$, $\{B_i\}$ from~\eqref{eq:kronstruct}.
\State $W_r = W.\texttt{rearrange((m p) (n q) $\to$ }$
\State \hfill \texttt{(m n) (p q))}$\qquad\qquad$
\State $USV^T=\texttt{SVD}(W_{r})$
\State $A' = U[: , :r]S[:r,  :r]^{1/2}$
\State $B' = S[:r, :r]^{1/2}V[:, :r]^T$
\State $A = A'.\texttt{rearrange('(m p) r $\to$ r m p')}$
\State $B = B'.\texttt{rearrange('(n q) r $\to$ r n q')}$
\State \textbf{return} $A, B$  
\end{algorithmic}
\end{algorithm}

\subsection{\texorpdfstring{$\mathcal{GS}$}-matrices}\label{sec:gs}

\begin{definition}
    $\mathcal{GS}$-matrices are matrices that can be represented in the form $P_L (L P R) P_R$, where $L, R$ are block-diagonal matrices and $P_L, P, P_R$ are permutation matrices. 
\end{definition}
This class of matrices \cite{group_and_shuffle} generalizes Monarch \cite{dao2022monarch} matrices and describes matrices with low-rank blocks up to a permutation of rows and columns. Thanks to this property, the projection step
\[
    \| W - P_L (L P R) P_R \|_F^2 \to \min_{L, R}
\]
can be performed efficiently using an SVD-based procedure described in \cite{group_and_shuffle}. 
Likewise for the Kronecker decomposition, the weighted approximation problem
\[
    \| X(W - P_L (L P R) P_R )\|_F^2 \to \min_{L, R}
\]
does not admit a simple solution.
Nevertheless, it can still be solved numerically by alternating iterations with respect to $L$ and $R$, see Appendix~\ref{sec:gs_weighted}.

\subsection{Matrices with zero blocks and relation to SliceGPT \label{sec:slicegpt}} 
Another structured class one could consider is block-sparse matrices, which include matrices with a single nonzero block. 
For example, this includes matrices of the forms:
\[
    \begin{pmatrix}
        W & 0
    \end{pmatrix}, 
    \quad 
    \begin{pmatrix}
        W \\
        0
    \end{pmatrix},
    \quad
    \begin{pmatrix}
        W & 0 \\
        0 & 0
    \end{pmatrix}.
\]
Such structures are not frequently used due to their simplicity and poor expressivity. 
Nevertheless, when paired with rotations, they can become a useful representation. 
Interestingly enough, we find that utilizing these classes and solving our objective have some relation to the SliceGPT method.  
\label{prop:slicegpt}
\begin{proposition}
    Let $\lambda_{in} = 0$. Let $\Si$ and $\Ss$ be  matrices with zero columns. Then, solving our objective is equivalent to finding the rotation of the SliceGPT method and column slicing of $W_{out}$ and $Q_{\ell - 1}^T Q_{\ell}$. Row slicing of $W_{in}$ and $Q_{\ell - 1}^T Q_{\ell}$ arises naturally due to the sparse structure of inputs. 
\end{proposition}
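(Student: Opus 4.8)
The plan is to start from the objective that still retains the skip compression (the display with $\widehat{W}^{\ell}_{skip}$ just preceding the final approximation that yields~\eqref{eq:optimization_problem}), set $\lambda_{in}=0$ so that only the combined output-plus-skip Frobenius term survives, and then exploit that a \emph{left} weighting by an activation matrix makes the squared Frobenius norm separate across the columns of its argument. Throughout I take a common zero pattern for $\So$ and $\Ss$: let $Z$ index the zeroed columns of $\widehat{W}^{\ell}_{out}\in\So$ and $\widehat{W}^{\ell}_{skip}\in\Ss$, with $K$ its complement.

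First I would group the two surviving terms into a single residual. Since $X^{\ell}_{out}(W^{\ell}_{out}Q_\ell-\widehat{W}^{\ell}_{out})+X^{\ell}_{skip}(Q_\ell-Q_{\ell-1}\widehat{W}^{\ell}_{skip})=Y Q_\ell - X^{\ell}_{out}\widehat{W}^{\ell}_{out}-X^{\ell}_{skip}Q_{\ell-1}\widehat{W}^{\ell}_{skip}$ with $Y:=X^{\ell}_{out}W^{\ell}_{out}+X^{\ell}_{skip}$ the unrotated residual-stream output, the squared norm splits over columns. For $j\in K$ the free columns of $\widehat{W}^{\ell}_{out},\widehat{W}^{\ell}_{skip}$ can be chosen (e.g.\ as the $j$-th columns of $W^{\ell}_{out}Q_\ell$ and $Q_{\ell-1}^{T}Q_\ell$) to annihilate that column entirely, so it contributes nothing; for $j\in Z$ both are forced to zero and the column contributes $\|(YQ_\ell) e_j\|^2$. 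Hence, after the inner minimization over $\widehat{W}$, the objective collapses to $\sum_{j\in Z} q_j^{T}(Y^{T}Y)\,q_j$, where $q_j$ are the columns of $Q_\ell$.

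Next I would solve $\min \sum_{j\in Z} q_j^{T}(Y^{T}Y)q_j$ over orthogonal $Q_\ell$ and over $Z$ of fixed cardinality. By the Rayleigh--Ritz / Ky Fan characterization, the minimizing subspace $\mathrm{span}\{q_j : j\in Z\}$ is spanned by the eigenvectors of $Y^{T}Y$ with the smallest eigenvalues, equivalently the kept directions $\{q_j : j\in K\}$ are the top principal components of the residual-stream covariance $Y^{T}Y$. This is exactly the rotation SliceGPT extracts from the activation covariance, and zeroing the $Z$-columns of $\widehat{W}^{\ell}_{out}$ and of $\widehat{W}^{\ell}_{skip}=Q_{\ell-1}^{T}Q_\ell$ is precisely SliceGPT's column slicing of $W_{out}$ and of the skip matrix. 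For the row-slicing claim I would argue by propagation of sparsity: once the residual stream has its $Z$-columns zeroed, the normalized input $X^{\ell}_{in}$ fed to $W^{\ell}_{in}$ (and the skip input multiplied by $Q_{\ell-1}^{T}Q_\ell$) inherits zero columns on $Z$, so $X^{\ell}_{in}W^{\ell}_{in}$ depends only on the rows of $W^{\ell}_{in}$ indexed by $K$; the $Z$-rows may be deleted with no change to the product, which is the row slicing of $W_{in}$ and of $Q_{\ell-1}^{T}Q_\ell$.

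I expect the main obstacle to be the second step: carefully justifying that the coupled inner minimization over the two structured factors sharing one Frobenius norm decouples column-wise and reduces \emph{exactly} to $\sum_{j\in Z}\|(YQ_\ell)e_j\|^2$ --- in particular that adopting a common zero pattern $Z$ for $\So$ and $\Ss$ is without loss of optimality --- and then matching the resulting eigenvector solution bit-for-bit with SliceGPT's covariance-based construction so that the equivalence is genuine rather than merely analogous.
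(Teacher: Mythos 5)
Your proposal is correct and lands on the same characterization as the paper: with $\lambda_{in}=0$ the problem reduces to minimizing $\|(X_{out}W_{out}+X_{skip})\,Q\,(I-P)\|_F^2$ over orthogonal $Q$ (with $P$ the projector onto the kept coordinates), whose solution is the principal-component rotation of the residual-stream covariance $Y^TY$, $Y=X_{out}W_{out}+X_{skip}$ --- exactly SliceGPT's rotation --- and the row slicing of $W_{in}$ follows by propagating the induced sparsity, just as in the paper. Where you differ is in how optimality is certified. The paper multiplies the residual by $Q^T$ on the right, observes that any admissible $(\widehat{W}_{out},\widehat{W}_{skip})$ yields a rank-$\le d$ approximation of $Y$, exhibits the explicit candidates $\widehat{W}_{out}=W_{out}QP$, $\widehat{W}_{skip}=Q_{\ell-1}^TQP$, and matches the Eckart--Young lower bound. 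You instead carry out the inner minimization exactly, column by column: kept columns are annihilated for free, zeroed columns are forced to contribute $q_j^T Y^TY q_j$, and the outer problem is a Ky Fan trace minimization. Your route is somewhat more self-contained --- it shows the reduction to $\sum_{j\in Z}\|Yq_j\|^2$ is an exact identity rather than an upper bound realized by a particular candidate --- at the cost of needing the column-wise decoupling argument; the paper's route is shorter because the rank argument supplies the lower bound in one line. The caveat you flag about assuming a common zero pattern for $\mathcal{S}_{out}$ and $\mathcal{S}_{skip}$ is real but is equally implicit in the paper's proof (a single projector $P$ is used for both factors), and it is forced anyway if one wants equivalence with SliceGPT's uniform slicing, so it does not constitute a gap relative to the paper's own argument.
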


\begin{proof}
    See Appendix~\ref{apx:prop1_proof}.
\end{proof}

\section{Optimization algorithm} \label{sec:optimization}

\subsection{Orthogonal Procrustes problem \label{sec:opp}}

The problem of finding an orthogonal matrix $Q$, which most closely fits a given matrix $A$ to $B$, is called an orthogonal Procrustes problem (OPP):
\[\|QA-B\|_F\to\min_{Q^TQ=I}.\]
It was first solved in the work \cite{schonemann1966generalized}. The solution is $Q=UV^T$, where $U$ and $V^T$ are from the SVD: $BA^T = U\Sigma V^T$.

The extension of the Procrustes problem, where an orthogonal matrix $Q$ is multiplied from two sides, is called the weighted orthogonal Procrustes problem (WOPP):
\[\|CQA-B\|_F\to\min_{Q^TQ=I}.\]
Unfortunately, WOPP does not have a simple analytical solution \cite{lissitz1976solution}. 
We solve it by parametrizing $Q$ using the so-called Cayley transform~\cite{golub2013matrix} and using conjugate-gradients (see Algorithm~\ref{alg:weighted_als} and~\eqref{eq:cayley}).

\subsection{Efficient initialization \label{sec:initialization}}
As we mentioned above, the optimization problem~\eqref{eq:optimization_problem} does not admit a simple solution. 
Therefore, we suggest finding a proper initialization for the arising matrices  first. 
As a good initial point we can use optimal solution in the Frobenius norm:
\[\|W_{out}Q-\widehat{W}_{out}\|^2_F+\|Q^TW_{in}-\widehat{W}_{in}\|^2_F\to \min_{\substack{\widehat{W}_{out}, \widehat{W}_{in},\\
Q^TQ=I}}\]
To solve this problem, we can rewrite it as:
\[
\left\|[W_{out}, W_{in}^T]Q-[\widehat{W}_{out}, \widehat{W}_{in}^T]\right\|^2_F\to \min_{\substack{\widehat{W}_{out}, \widehat{W}_{in}, \\ Q^TQ=I}},
\]
which is solved using the alternating scheme called alternating least squares (ALS), see Algorithm~\ref{alg:als}.
In particular, we first find optimal  $\widehat{W}_{out}, \widehat{W}_{in}$ for the fixed $Q$, and then optimal orthogonal matrix $Q$ for the fixed $\widehat{W}_{out}, \widehat{W}_{in}$ (orthogonal Procrustes problem).
The process is repeated until the maximum number of iterations is reached.

\begin{algorithm}
\caption{ALS in the Frobenius norm}\label{alg:als}
\begin{algorithmic}
\State \textbf{Input:} $W_{out}, W_{in}$.
\State Set $Q=I$.
\For{$1 \dots \text{n\_iters}$}
    \State $\triangleright$ Projection step (Section~\ref{sec:structall}):
    \State $\quad\widehat{W}_{in} = \argmin\limits_{W \in \Si} \|Q^TW_{in} - W\|^2_F$ 
    \State $\quad\widehat{W}_{out} = \argmin\limits_{W \in \So} \|W_{out}Q - W\|^2_F$ 
    \State $\quad W_{appr} = [\widehat{W}_{out}, \widehat{W}_{in}^T]$
    \State $\quad W = [{W}_{out}, {W}_{in}^T]$
    \State $\triangleright$ Solve OPP, using SVD (Section~\ref{sec:opp}) 
    \State $\quad Q = \argmin\limits_{Q^TQ=I} \|WQ - W_{appr}\|^2_F $ 
\EndFor
\State \textbf{return} $Q, \widehat{W}_{in}, \widehat{W}_{out}$ 
\end{algorithmic}
\end{algorithm}

\subsection{Alternating iteration}
After obtaining initializations for orthogonal layers from ALS in Frobenius norm, we proceed with more computationally challenging weighted optimization scheme, described in Algorithm~\ref{alg:weighted_als}. Even though we only do a handful of steps of the algorithm, this step is crucial and noticeably improves results, as is shown in Table \ref{table:ppl}.  
\begin{algorithm}[tbh]
\caption{ALS in the weighted norm}\label{alg:weighted_als}
\begin{algorithmic}
\State Before compression in weighted norm, rotate the network with $Q_{init}$ from Algorithm~\ref{alg:als}.
\State \textbf{Input} $W_{out}, W_{in}, X_{out}, X_{in}$.
\State Set $Q = I$
\For{$1 \dots \text{n\_iters}$} 
    \State $\triangleright$ Weighted norm projection (Appx. \ref{sec:kron_weighted}, \ref{sec:gs_weighted}):
    \State $\quad\widehat{W}_{out}=\argmin\limits_{W \in \Si}\|X_{out}(W_{out}Q -W)\|^2_F$ 
    \State $\quad\widehat{W}_{in} = \argmin\limits_{W \in \So}\|X_{in}(W_{in} -QW)\|^2_F$ 
    \State $\triangleright$ Solve WOPP by parametrizing $Q$ with Cayley transform~\eqref{eq:cayley} and using conjugate gradients:
    \State $\quad \begin{aligned}Q = \argmin_{Q^TQ=I} &\|X_{out}(W_{out}Q- \widehat{W}_{out})\|^2_F + \\ + &\|X_{in}(W_{in}- Q\widehat{W}_{in})\|^2_F\end{aligned}$ 
\EndFor
\State \textbf{return} $Q, \widehat{W}_{in}, \widehat{W}_{out}$ 
\end{algorithmic}
\end{algorithm}

\subsection{Practical aspects}

\textbf{Computing inputs.} The input of the linear layer~$X$ is a matrix of the shape $bs\times n$, where $b, s$ and $n$ are respectively the sequence length, the number of calibration samples and the hidden dimension. 
Typically, $b$ and $s$ are large, so making computations with $X$ is challenging. 
However, we can use square root of the smaller $n\times n$ correlation matrix $X^TX \in\mathbb{R}^{n\times n}$ instead of $X$ to solve the optimization problem. Indeed, 
\[
    \|(X^TX)^{1/2}(\dots)\|_F=\|X(\dots)\|_F.
\]
To efficiently compute the correlation matrix, we divide $X\in\mathbb{R}^{bs\times n}$ into smaller matrices (batches) $X_i\in\mathbb{R}^{b_is\times n}$, which fit into memory, and compute 
$
    X^TX=\sum_i X_i^T X_i.
$

\textbf{Embedding and head layers.} For the embedding layer the inputs $X_{in}$ are one-hot vectors with ones standing in the position of the chosen tokens. Therefore, the correlation matrix $X^{T}_{in}X_{in}$ is equal to the diagonal matrix $D$, where $D_{ii}$ is the number of times the $i$-th token appears in the calibration dataset. This simplifies the weighted problem to:
\[
    \| \sqrt{D} (W_{emb} Q - \widehat{W}_{emb}) \|_F^2 + \|\dots \|_F^2\to \min_{Q},
\]
where $W_{emb}$ is the weight of embedding, $\widehat{W}_{emb}$ is its approximation with the matrix decomposition.
This prompted us to also experiment with different functions of $D$ and we found that $\sqrt{D+1}$ gave the best results, although $\log(D+1)$ also worked well (see Tables \ref{table:ppl}, \ref{table:zero_shot}).
We have also discovered that weighting the model's head with the same diagonal matrix as embedding during compression in Frobenius norm additionally improves the performance. 

\textbf{Orthogonal parametrization.} When rotating the network with the set of weights $\Q = \{Q_1, Q_2, \dots Q_{L}\}$, one downside is additional weights $Q_{\ell- 1}^TQ_{\ell}$ that arise in skip connections and should also be stored as the weights. 
This negatively affects the compression ratio. 
One trick to reduce the number of parameters is an observation that $Q_{\ell- 1}^TQ_{\ell}$ is an orthogonal matrix. 

Orthogonal matrices (except for those that have an eigenvalue exactly equal to $-1$) of size $d \times d$ can be represented through the 
Cayley transform:
\begin{equation} \label{eq:cayley}
    Q = (I + K)(I - K)^{-1},
\end{equation}
or through the matrix exponential (if $\det Q=1$):
\begin{equation} \label{eq:exponent}
    Q = \sum^{\infty}_{n=0}\frac{K^n}{n!}
\end{equation}
where $K$ is skew-symmetric: $K = -K^T$. 
This allows us to only store the upper triangular part of the matrix $K$, which reduces the number of parameters from $d^2$ to $\frac{d(d-1)}{2}$. If $\det Q = -1$, any row of $Q$ can be multiplied by $-1$ to change $\det$ to $1$. For the Cayley transform, if there exist eigenvectors $u_i$ corresponding to $-1$ eigenvalues, we multiply $Q$ by Householder matrices $I-2vv^T/\|v\|_2^2$, where $v=Re(u_i)$ or $v=Im(u_i)$, to eliminate all $-1$ from the spectrum. 

\section{Experiments}

\begin{table*}[t]
\begin{center}
\resizebox{\textwidth}{!}{%
\begin{tabular}{c|ccc|cccccc|cccc}
\toprule
\multirow{3}{*}{Method}  &  \multirow{3}{*}{Norm} &  \multirow{3}{*}{Struct.} &  \multirow{3}{*}{Coef.} & \multicolumn{6}{c}{OPT} & \multicolumn{4}{c}{Llama2} \\
& & & &  \multicolumn{2}{c}{125m} &\multicolumn{2}{c}{2.7b}& \multicolumn{2}{c}{13b} & \multicolumn{2}{c}{7b} &\multicolumn{2}{c}{13b}\\
& & & & ppl & \%  &ppl & \% &ppl & \% & ppl & \% &ppl & \%\\
\midrule
Dense  &  & & &27.65 & 0 & 12.47 & 0 & 10.13 & 0 &5.47 & 0 &4.88& 0\\
SliceGPT &  & &  &38.65 & 20.12&14.84 & 16.51&11.12 & 16.00& 7.60 &16.04& 6.60 & 15.94\\
\hline 
ProcrustesGPT & F & Kron. & $\log(D+1)$ & 58.17& 19.59 & 82.47 & 15.57 & 15.20 & 15.00 &16.37 & 15.04  & 8.43 & 14.93\\
ProcrustesGPT  &  W & Kron. & $\log(D+1)$ & 38.48 &19.59& 14.17 & 15.57 & 10.87 &  15.00& 8.48 & 15.04 & 5.72 & 14.93\\

\hline 

ProcrustesGPT & F & Kron. &  $\sqrt{D+1}$ & 55.91 & 19.59 & 78.97 & 15.57 & 15.38 & 15.00 & 11.98 & 15.04  & 8.39  & 14.93\\
ProcrustesGPT  &  W & Kron. &  $\sqrt{D+1}$ & \textbf{36.08} &19.59&  13.95 & 15.57 & \textbf{10.67} &  15.00 & \textbf{6.54} & 15.04 & \textbf{5.71} & 14.93\\

\hline

ProcrustesGPT & F & GS &  $\sqrt{D+1}$ & 154.09 & 19.45 & 152.51 & 15.57 & 261.57 & 15.00 & 11.65 & 15.05 & 8.80 & 14.93 \\
ProcrustesGPT  &  W & GS &  $\sqrt{D+1}$ & 39.58 & 19.45 & \textbf{13.81} & 15.57 & 10.68 & 15.00 & 6.76 & 15.05 & 5.96 & 14.93\\
\bottomrule
\end{tabular}
}
\end{center}
\caption{\label{table:ppl} Perplexity results on WikiText2. The calibration dataset size is 128 sequences of 2048 tokens. ``Ppl'' denotes perplexity, the columns with $\%$ show the percentage of parameters compressed. ``F'' stands for the optimization in Frobenius norm, which is used before optimization in weighted norm, denoted as ``W''.``Coef.'' stands for the diagonal matrix, which weighs embedding and head. In the rows with $\log(D+1)$, $\lambda_{in}=1$; in the rows with $\sqrt{D+1}$, $\lambda_{in} = \frac{\|X_{out}W_{out}\|^2}{\|X_{in}W_{in}\|^2}$. }
\end{table*}

\begin{table*}
\begin{center}
\resizebox{\textwidth}{!}{%
\begin{tabular}{c|c|cc|ccccc|c}
\toprule
Model & Method  & Struct. & Coef. &ARC-c & ARC-e &  HellaS. & PIQA & WinoG. & Average \\
\midrule
\multirow{6}{*}{OPT-13b} & Dense & & & 35.75 & 61.83 & 69.88 & 76.82 &	65.19 & 61.89 \\
& SliceGPT & & &33.62 & 61.95 & 62.99 & 73.67 & 63.30 & 59.10 \\
& ProcrustesGPT & Kron. & $\log(D+1)$ & 36.52 & 58.71 & 68.27 & 76.22 & 63.61 & 60.67\\
& ProcrustesGPT & Kron. & $\sqrt{D+1}$ & \textbf{36.60} & \textbf{62.25} & \textbf{68.41} & \textbf{76.44} & 64.96 & \textbf{61.73}\\
& ProcrustesGPT & GS &  $\sqrt{D+1}$ & 35.07 & 59.09 & 67.14 & 76.06 & 65.59 & 60.59\\

\midrule
\multirow{6}{*}{Llama2-7b} & Dense & & &46.25 & 74.58& 75.99 & 79.11 & 68.82 &	68.95\\
& SliceGPT & &&35.15 & 56.10 & 53.04 & 65.78 & 62.98 & 54.61 \\
& ProcrustesGPT & Kron. & $\log(D+1)$ & 41.98 & \textbf{68.35} & 69.72 & 73.94 & \textbf{67.40} & 64.28\\
& ProcrustesGPT & Kron. & $\sqrt{D+1}$ & \textbf{42.32}  & 68.01 & \textbf{70.20} & \textbf{76.39} & 66.30 & \textbf{64.64}\\
& ProcrustesGPT & GS & $\sqrt{D+1}$ & 36.69 & 62.88 & 66.91 & 74.37 & \textbf{67.40} & 61.65\\

\midrule
\multirow{6}{*}{Llama2-13b} & Dense & && 49.23& 77.53 & 79.36 & 80.52 & 72.30 & 71.79 \\
& SliceGPT & && 39.51 & 62.92 & 56.98 & 67.25 & 67.64 & 58.86\\
& ProcrustesGPT & Kron. & $\log(D+1)$ &\textbf{44.97} & 73.19 & 73.43 & \textbf{77.58} & 70.48 & 67.93\\
& ProcrustesGPT & Kron. & $\sqrt{D+1}$ & 44.80 & 73.23 & \textbf{73.94} & 77.53 & \textbf{70.88} & \textbf{68.07}\\
& ProcrustesGPT & GS & $\sqrt{D+1}$ & 44.45 & \textbf{73.32} & 71.19 & 76.50 &  70.32 & 67.16 \\
\bottomrule
\end{tabular}
}
\end{center}
\caption{\label{table:zero_shot} Zero-shot task performance of compressed OPT-13b, Llama2-7b and Llama2-13b. ProcrustesGPT compresses the weights with Kronecker product. The compression ratio is the same as in Table \ref{table:ppl}.}
\end{table*}

\subsection{Setup}
 We implement our method for OPT  \cite{zhang2022opt} and Llama2 \cite{touvron2023llama} models using Hugging Face Transformers \cite{wolf2020transformers}. As the calibration data we use 128 sequences of length 2048 from  WiKiText2 dataset \cite{merity2016pointer}.
 Experiments were run on a single V100 GPU with 32GB memory or A100 GPU with 80GB memory.
 We evaluate zero-shot performance using LM Evaluation Harness \cite{eval-harness}  across ARC-e, ARC-c \cite{clark2018think}, PIQA \cite{bisk2020piqa},
WinoGrande \cite{sakaguchi2021winogrande}, and HellaSwag \cite{zellers2019hellaswag} datasets.

\subsection{Details of implementation}

Every pair of the blocks $W_{out}, W_{in}$ is rotated independently with its own orthogonal matrix, which allows us to parallelize the computation of the orthogonal matrices. We have implemented parallelization of ALS in Frobenius norm (Algorithm~\ref{alg:als}). The speedup depends on the number of processes that can be run simultaneously in GPU memory.

The compression process consists of two stages. 
Firstly, we compute optimal orthogonal matrices in the Frobenius norm (Algorithm \ref{alg:als}), then we rotate the network using them and run compression in the weighted norm (Algorithm \ref{alg:weighted_als}). We use 50 ALS iterations in Frobenius norm for small models and 25 iterations for 13b models.  We use 1 iteration of ALS in the weighted norm  for all models. We parametrize orthogonal matrix with the Cayley transform and apply 500 iterations of conjugate gradients to find optimal orthogonal matrices in the weighted norm. 
We do not compress the Values matrix at all, as it noticeably degrades the results.

\subsection{Results}

\textbf{Generation performance.} We assess the generation performance of the compressed models using the WikiText2 test set. Table \ref{table:ppl} compares our method against SliceGPT at approximately 25\% compression of the weight matrices. For the details on choice of matrix sizes in decompositions, see Appendix \ref{sec:decomposition_sizes}.  The ``F'' row demonstrates the perplexity achieved by finding optimal orthogonal matrices in the Frobenius norm (see Section \ref{sec:initialization}). We observe that compression using the Frobenius norm alone is not sufficient to preserve model performance,  particularly for small models. However, further compression in the weighted norm helps to regain perplexity, which is shown in the ``W'' row. We observe that $\mathcal{GS}$ performs consistently worse than Kronecker products in the Frobenius norm for OPT models, and produces approximately the same results in the weighted norm. Our method surpasses SliceGPT at a similar level of compression. 
The compression ratios in the tables are presented with respect to the full model size.

Table \ref{table:comparison_ppl} compares ProcrustesGPT to the methods that do not compress embedding and model's head. SVD-LLM \cite{wang2024svd} applies weighted low-rank approximation, while DISP-LLM \cite{gao2024disp} and SLEB \cite{song2024sleb} are pruning methods. ProcrustesGPT outperforms
other baselines at lower compression rates, but its performance starts deteriorating at 36\% compression of parameters. 

\textbf{Zero-shot tasks.} We evaluate our models compressed in the weighted norm on five zero-shot tasks. Our method consistently outperforms SliceGPT, as shown in Table \ref{table:zero_shot}. The difference is more pronounced for the Llama2 models, where our method surpasses SliceGPT by an average of 9-10\%. We notice that compression with sum of Kronecker products better maintains model quality than $\mathcal{GS}$.  Table \ref{table:comparison_zero_shot} shows that on average ProcrustesGPT achieves better zero-shot performance than other baselines.

\begin{table*}[t]
\begin{center}
\resizebox{\textwidth}{!}{%
\begin{tabular}{c|cc|cc|cc|cc|cc|cc}
\toprule
 & \multicolumn{6}{c}{Llama2-7b} & \multicolumn{6}{c}{Llama2-13b} \\
Method  & ppl & \% & ppl & \% & ppl & \% & ppl & \% & ppl & \% & ppl & \% \\ 
\midrule
Dense & 5.47 & 0 & 5.47 & 0  & 5.47 & 0 & 4.88 & 0 & 4.88 & 0 & 4.88 & 0\\
SVD-LLM &   7.86 & 14.44 & 9.73 & 25.00 & 14.39 & 35.58 & 6.34 & 14.64 & 7.53 & 25.36 & 10.08 & 36.09\\
DISP-LLM & 6.80 & 14.31 & 8.52 &25.02 & \textbf{10.92} & 35.60 & 6.23 & 14.60 & 7.90 & 25.36 & \textbf{10.05} & 36.13\\
SLEB & 6.95 & 12.01 & 10.39 & 24.03 & 22.76 & 36.04 & 5.85 & 12.19 & 7.73 & 24.37 & 11.36 & 36.56 \\
ProcrustesGPT (Kron) & \textbf{6.43} & 14.07  & 8.19 & 25.09 & 19.55 & 36.11 & \textbf{5.68} & 14.30 & \textbf{6.95} & 25.48 & 16.88 & 36.66 \\
ProcrustesGPT (GS) & 6.65 & 14.08 & \textbf{7.97} & 25.08 & 14.20 & 36.12 & 5.94 & 14.30 & 7.02 & 25.48 & 10.85 & 36.66\\
\bottomrule
\end{tabular}
}
\end{center}
\caption{\label{table:comparison_ppl} Perplexity of compressed Llama2 on WikiText2. Embedding and head are not compressed for all methods.
The calibration dataset size is 128 sequences of 2048 tokens. $\%$ shows the percentage of parameters compressed. }
\end{table*}

\begin{table*}[t]
\begin{center}
\begin{tabular}{c|c|ccccc|c}
\toprule
 Method & $\%$ &ARC-c & ARC-e &  HellaS. & PIQA & WinoG. & Average \\ 
\midrule
Dense & 0 & 49.23 & 77.53 & 79.36 & 80.52 & 72.30 & 71.79\\
\midrule 
SVD-LLM & 14.64 & 39.25 & 65.61 & 63.92 & 73.83 & 68.35 & 62.19\\
DISP-LLM & 14.60 & \textbf{47.61} & 70.12 & \textbf{74.77} & 76.93 & 69.61 & 67.81 \\
SLEB  & 12.19 & 46.33 & 72.77 & 74.11 & \textbf{78.18} & 69.85 & 68.25 \\
ProcrustesGPT (Kron) & 14.30 & 45.56 & \textbf{74.16} & 74.29 & 77.58 & 70.24 & \textbf{68.37}\\
ProcrustesGPT (GS) & 14.30 & 45.05 & 73.40 & 71.71 & 76.77 & \textbf{70.56} & 67.50 \\
\midrule 
SVD-LLM & 25.36 & 32.76 & 54.34 & 54.19 & 68.12 & 65.98 & 55.08 \\
DISP-LLM & 25.36 & 40.27 & 61.83 & \textbf{69.40} & 73.56 & 63.30 & 61.67\\
SLEB & 24.37& 38.14 & 63.47 & 66.78 & \textbf{76.39} & 60.70 & 61.10\\
ProcrustesGPT (Kron) & 25.48 & 38.48 & 70.03 & 66.42 & 75.03 & 66.06 & 63.20	\\
ProcrustesGPT (GS) &  25.48 & \textbf{41.04} & \textbf{71.38} & 66.12 & 74.71 & \textbf{67.17} & \textbf{64.10}\\
\midrule
SVD-LLM & 36.09 & 26.96 & 43.22 & 43.35 & 61.53 & 60.54 & 47.12
\\
DISP-LLM & 36.13 & 30.72& 53.03 & \textbf{60.65} & 68.72 & 58.64 & 54.35\\
SLEB & 36.56 & 33.62 & 52.15 & 58.42 & \textbf{71.27} & 59.67& 55.03\\
ProcrustesGPT (Kron) & 36.66 & 29.69 &52.95 &48.56 &64.80 &59.98 &51.20 \\
ProcrustesGPT (GS) & 36.66 & \textbf{33.70} & \textbf{61.87} &52.58 &	67.85 & \textbf{62.43} & \textbf{55.69}\\
\bottomrule
\end{tabular}
\end{center}
\caption{\label{table:comparison_zero_shot} Zero-shot performance of compressed Llama2-13b. Embedding and head are not compressed for all methods. \% shows the percentage of model parameters compressed. }
\end{table*}

\section{Conclusion}
This paper presents an approach to LLM compression with structured matrix factorizations, which is suitable for compression with various types of decompositions, including Kronecker products and $\mathcal{GS}$ matrices. Our method maintains performance in generation and zero-shot tasks, and does not require recovery fine-tuning. We hope this work will inspire further research on training-free compression with structured representations.

\section{Limitations}
A natural question may arise if the models compressed using ProcrustesGPT can be fine-tuned with standard methods such as LoRA. 
The structured weights may not align well with the low-rank nature of adapters.
As a result, after the fine-tuning we will need to store the structured representation of initial weights and the LoRA adapters separately, which may be inconvenient.
Therefore, models compressed using structured factorizations require the development of PEFT methods that are better suited to these structures.

\section{Acknowledgments}
Support from the Basic Research Program of HSE University is gratefully acknowledged. The calculations were performed in part through the computational resources of HPC facilities at HSE University~\cite{kostenetskiy2021hpc}.

\appendix

\section{Kronecker approximation in weighted norm \label{sec:kron_weighted}}
\label{sec:appendix}
Let $A\in \mathbb{R}^{r\times m_1 \times n_1}, B\in \mathbb{R}^{r\times m_2 \times n_2}$. The problem of Kronecker approximation in the weighted norm can be written as
\[g(A, B)=\left\|Y-X\sum^{r}_{i}(A_i\otimes B_i)\right\|_F^2\to \min_{A, B}.\]
We can not solve this problem explicitly, so we will iteratively optimize this expression with respect to $A$ for the fixed $B$ and with respect to $B$ for the fixed $A$. We can rewrite it as
\[\begin{split}
& g(A, B)=tr \left( Y^TY-2Y^TX \left( \sum^r_i A_i\otimes B_i \right) +\right. \\ & \left.+ \left( \sum^r_iA_i\otimes B_i \right)^TX^TX \left( \sum^r_j A_j\otimes B_j \right) \right) . \end{split}
\]
Now we take the differential with respect to $A$
\[\begin{split}
&dg=tr \left( 2Y^TX \left( \sum^r_i dA_i\otimes B_i\right) +\right.\\& \left. +2 \left( \sum^r_i A_i\otimes B_i \right)^TX^TX \left( \sum^r_j dA_j\otimes B_j \right) \right) .\end{split}\]
To find the optimal $A$ we should equate the gradient with respect to $A$ to zero. It is complicated to write the solution using formulas, so we will use tensor diagrams instead.
In tensor diagrams, the circles denote multidimensional arrays, the connections between them denote the summation by dimensions. Let us equate the $dg$ to zero and illustrate it as follows:
\[\begin{split}
&tr\left(\left(\sum^r_i A_i\otimes B_i \right)^TX^TX \left( \sum^r_j dA_j\otimes B_j \right) \right) - \\ &-tr \left( Y^TX \left( \sum^r_i dA_i\otimes B_i\right) \right) = 0\end{split}\]
\begin{center}
\resizebox{0.5\textwidth}{!}{%
\includegraphics{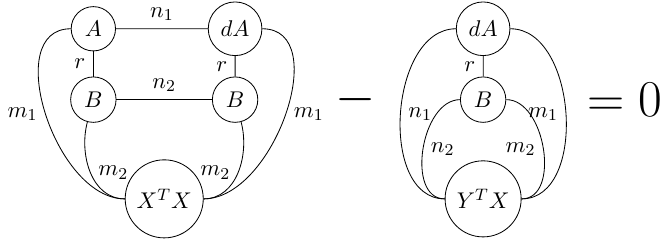}
}%
\end{center}
Now we equate the gradient with respect to $A$ to zero:
\begin{center}
\resizebox{0.5\textwidth}{!}{%
\includegraphics{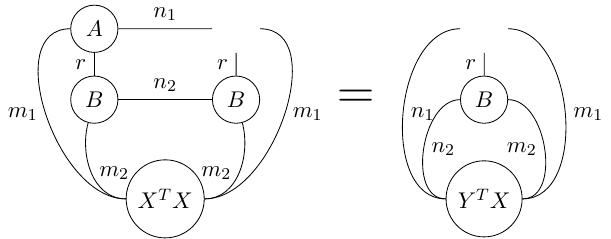}
}%
\end{center}
We can reshape the tensors in the left and right parts into matrices. Let us denote the matrix in the right part by $D$. The left part is the tensor $A$ reshaped into matrix of size $rm_1 \times n_1$ multiplied by a matrix $C$:
\begin{center}
\resizebox{0.5\textwidth}{!}{%
\includegraphics{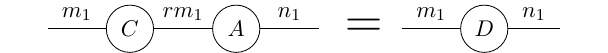}
}%
\end{center}
Then the solution is \[A=(C^+D).reshape(r, m_1, n_1).\] 
The pseudo-code is shown in Algorithm \ref{alg:weighted_kron}.

\begin{algorithm}
\caption{Kronecker appr. in weighted norm}\label{alg:weighted_kron}
\begin{algorithmic}
\State \textbf{Input}  $B\in \mathbb{R}^{r\times m_2\times n_2}, Y\in \mathbb{R}^{m_1m_2\times n_1n_2}, X \in \mathbb{R}^{m_1m_2\times m_1m_2}$.
\State \textbf{Solve} $\|Y-X\sum^{r}_{i}(A_i\otimes B_i)\|^2_F\to \min_A$
\State $Z = X^TX.\text{reshape}(m_1, m_2, m_1, m_2)$
\State $C = \text{einsum}(\text{'rij,nkj,aibk$\to$ranb'}, B, B, Z)$
\State $C=C.\text{reshape}(rm_1, rm_1)$
\State $T = X^TY.\text{reshape}(m_1, m_2, n_1, n_2)$
\State $D=\text{einsum}(\text{'rij,aibj$\to$rab'}, B, T)$
\State $D=D.\text{reshape}(rm_1, n_1)$
\State $A = (C^{+}D).\text{reshape}(r, m_1, n_1)$
\State \textbf{return} $A$ 
\end{algorithmic}
\end{algorithm}

\section{\texorpdfstring{$\mathcal{GS}$} approximation in weighted norm \label{sec:gs_weighted}}
Let $L$ be block-diagonal matrix with $kl$ blocks of size $L_i \in \mathbb{R}^{bl_1\times bl_2}$, $R$ be block-diagonal matrix with $kr$ blocks of size $R_i \in \mathbb{R}^{br_1\times br_2}$. Let us solve the approximation with $\mathcal{GS}$ matrices with fixed permutations $P_{L}, P, P_{R}$ in the  weighted norm:
\[\|Y'-X'P_LLPRP_R\|^2_F\to\min_{L, R}.\]
Due to the unitary invariance of Frobenius norm and orthogonality of permutation matrices, we can rewrite it as 
\[\|Y-XLPR\|^2_F\to\min_{L, R}, \]
where $X=X'P_L, Y=Y'P_R^T$. We can not solve this problem analytically, so we will optimize iteratively with respect to $L$ and $R$. 

To find optimal $R$ for the fixed $L$ we have to solve the least squares problem for each of the blocks $R_{i}$. Let us divide $Y$ and $XLP$ by columns into $kr$ blocks $Y_i \in \mathbb{R}^{kl\cdot bl1 \times bl2}$  and $(XLP)_i \in \mathbb{R}^{kl\cdot bl1 \times bl2}$. Then we can find the optimal $R$ as
\[R_i=(XLP)_i^+Y_i.\]
To solve the optimization problem with respect to $L$ for the fixed $R$, we use built-in iterative algorithm for least squares problem. 
However, to avoid poorly conditioned systems, we first divide $X$ and $PR$ into $kl$ blocks by columns and rows respectively, and make QR decomposition of $X_i$ and $(PR)_i$. 
The result is shown in Algorithm \ref{alg:monarch_L} below.

\begin{algorithm}
\caption{Find optimal $L$}\label{alg:monarch_L}
\begin{algorithmic}
\State \textbf{Input}  $X_i\in\mathbb{R}^{kl\cdot bl_1\times bl_1}, Y\in \mathbb{R}^{kl\cdot bl_1\times kr \cdot br_2}, (PR)_i \in \mathbb{R}^{kl\times kr\cdot br_2}$.
\State \textbf{Solve} $\|Y-\sum^{kl}_{i}X_iL_i(PR)_i\|^2_F\to \min_L$
\State $Q_{X_i}, R_{X_i}=qr(X_i)$
\State $Q_{(PR)^T_i}, R_{(PR)^T_i}=qr((PR)^T_i)$
\State $L = \argmin_L\left(Y-\sum_{i}^{kl}Q_{X_i}L_iQ^T_{(PR)^T_i}\right)$
\State $L_i=R_{X_i}^+L_i(R_{(PR)^T_i}^{T})^+$
\State \textbf{return} $L$ 
\end{algorithmic}
\end{algorithm}

\section{Proof of Proposition~\ref{prop:slicegpt}}
\label{apx:prop1_proof}
Firstly, let us write
\[\begin{split}
    &\|X_{out} (W_{out} Q - \widehat{W}_{out}) + \\& +X_{skip} (Q - Q_{\ell-1} \widehat{W}_{skip})\|_F^2 = \\
    &= \| (X_{out} W_{out} + X_{skip}) - \\ & - (X_{out}\widehat{W}_{out} + X_{skip}Q_{\ell-1} \widehat{W}_{skip})Q^T  \|_F^2\end{split}.
\]
Note that slicing of the columns of matrices $\widehat{W}_{skip}$ and $\widehat{W}_{out}$ make matrix $X_{out}\widehat{W}_{out} Q^T + X_{skip}  Q_{\ell - 1}\widehat{W}_{skip} Q^T$ at most of rank $d$, where $d$ is the dimension after slicing. 
If we now show, that we can choose appropriate matrices $Q, \widehat{W}_{out}, \widehat{W}_{skip}$ such that they correspond to best low-rank approximation of $X_{out} W_{out} + X_{skip}$, we will show that these matrices are the analytical solutions to our problem. Let $P$ equal to the following projector:
\[
    P = \begin{pmatrix}
        I_{d} & 0 \\
        0 & 0
    \end{pmatrix}
\]
Consider following matrices for $\widehat{W}_{out}, \widehat{W}_{skip}$:
\[
\widehat{W}_{out} = W_{out} Q P; \quad \widehat{W}_{skip} = Q_{\ell - 1}^T Q P
\] 
Substituting, we get:
\[
    \| (X_{out} W_{out} + X_{skip})Q (I - P) \|_F^2 \to \min_{Q^T Q = I}
\]
Solution to which is well-known and revolves around computing SVD of $(X_{out} W_{out} + X_{skip})$ (taking principal components). This also corresponds to the best low-rank approximation and therefore found solution is optimal. 

Up to a normalization layer, this corresponds to the rotations and slices of $W_{out}$ and $Q_{\ell - 1}^T Q_{\ell}$ layers applied in SliceGPT. Sparse block structure that arises in inputs after the first layers allows to also slice $W_{in}$ and $Q_{\ell - 1}^T Q_{\ell}$ along other dimension, making it equivalent to full SliceGPT scheme.

\section{Choice of sizes for factorizations.}
\label{sec:decomposition_sizes}
To compress a matrix $M$ of size $n\times m$ at approximately $p/q$ of its parameters with the sum of $r$ Kronecker products $\sum_i^r{A_i \otimes B_i}$, a natural choice is to set $r=p$, $A_i$ of size $q\times1$ and $B_i$ of size $n/q \times m$.  We assume $n$ is divisible by $q$. Then the number of parameters in the sum $\sum_i^{r=p}{A_i \otimes B_i}$ equals $p(q + (n/q)m) = qp + nm(p/q)$, which is approximately $p/q$ of initial matrix size. In our case, we use $q = 4$ and $r = 3$, yielding $\sim 25\%$ compression rate for layers. We use $q=5, r=8$ for $\sim 37.5\%$ compression, $q=1, r=2$ for $\sim 50\%$  compression. 

It is important to note, that splitting by $q$ should be applied on the side that is multiplied by an orthogonal matrix, so that it has an effect on approximation error. For example, if the matrix $W_{in}$ is multiplied by orthogonal matrix from the left, then decomposition should be $A_{i}\in \mathbb{R}^{q\times 1}, B_i\in \mathbb{R}^{n/q\times m}$ or $A_{i}\in \mathbb{R}^{n/q\times m}, B_i\in \mathbb{R}^{q\times 1}$. If $W_{out}$ is multiplied from the right, then $A_{i}\in \mathbb{R}^{1\times q}, B_i\in \mathbb{R}^{n\times m/q}$ or $A_{i}\in \mathbb{R}^{n\times m/q}, B_i\in \mathbb{R}^{1\times q}$. 

When compressing the matrix $M \in \mathbb{R}^{kl * bl_1 \times kr* br_2}$ with $GS$ decomposition using $L$ with $kl$ blocks of the size $bl_1\times bl_2$ and $R$ with $kr$ blocks of size $br_1\times br_2$, the compression ratio equals $c=\frac{kl*bl_1*bl_2+kr*br_1*br_2}{kl*kr*bl_1*br_2}$. From the definition of $GS$ matrices, $kl*bl_2=kr*br_1$, which means that it is easy to find $bl_2$ and $br_1$ if $c, kl, bl_1, kr, br_2$ are known. To compress square matrices at $c=3/4$, we choose $kl, kr = 4, 2$. For rectangular matrices, $kl, kr = 4, 8$. For embedding and head $kl, kr = 1, 4$.


\begin{thebibliography}{36}
\providecommand{\natexlab}[1]{#1}

\bibitem[{Abronin et~al.(2024)Abronin, Naumov, Mazur, Bystrov, Tsarova, Melnikov, Oseledets, Dolgov, Brasher, and Perelshtein}]{abronin2024tqcompressor}
V~Abronin, A~Naumov, D~Mazur, D~Bystrov, K~Tsarova, Ar~Melnikov, I~Oseledets, Sergey Dolgov, R~Brasher, and Michael Perelshtein. 2024.
\newblock Tqcompressor: improving tensor decomposition methods in neural networks via permutations.
\newblock \emph{arXiv preprint arXiv:2401.16367}.

\bibitem[{Acharya et~al.(2019)Acharya, Goel, Metallinou, and Dhillon}]{acharya2019online}
Anish Acharya, Rahul Goel, Angeliki Metallinou, and Inderjit Dhillon. 2019.
\newblock Online embedding compression for text classification using low rank matrix factorization.
\newblock In \emph{Proceedings of the aaai conference on artificial intelligence}, volume~33, pages 6196--6203.

\bibitem[{Ashkboos et~al.(2024)Ashkboos, Croci, Nascimento, Hoefler, and Hensman}]{ashkboos2024slicegpt}
Saleh Ashkboos, Maximilian~L Croci, Marcelo Gennari~do Nascimento, Torsten Hoefler, and James Hensman. 2024.
\newblock Slicegpt: Compress large language models by deleting rows and columns.
\newblock \emph{arXiv preprint arXiv:2401.15024}.

\bibitem[{Bisk et~al.(2020)Bisk, Zellers, Gao, Choi et~al.}]{bisk2020piqa}
Yonatan Bisk, Rowan Zellers, Jianfeng Gao, Yejin Choi, et~al. 2020.
\newblock Piqa: Reasoning about physical commonsense in natural language.
\newblock In \emph{Proceedings of the AAAI conference on artificial intelligence}, volume~34, pages 7432--7439.

\bibitem[{Chen et~al.(2018)Chen, Si, Li, Chelba, and Hsieh}]{chen2018groupreduce}
Patrick Chen, Si~Si, Yang Li, Ciprian Chelba, and Cho-Jui Hsieh. 2018.
\newblock Groupreduce: Block-wise low-rank approximation for neural language model shrinking.
\newblock \emph{Advances in Neural Information Processing Systems}, 31.

\bibitem[{Chen et~al.(2021)Chen, Yu, Dhillon, and Hsieh}]{chen2021drone}
Patrick Chen, Hsiang-Fu Yu, Inderjit Dhillon, and Cho-Jui Hsieh. 2021.
\newblock Drone: Data-aware low-rank compression for large nlp models.
\newblock \emph{Advances in neural information processing systems}, 34:29321--29334.

\bibitem[{Clark et~al.(2018)Clark, Cowhey, Etzioni, Khot, Sabharwal, Schoenick, and Tafjord}]{clark2018think}
Peter Clark, Isaac Cowhey, Oren Etzioni, Tushar Khot, Ashish Sabharwal, Carissa Schoenick, and Oyvind Tafjord. 2018.
\newblock Think you have solved question answering? try arc, the ai2 reasoning challenge.
\newblock \emph{arXiv preprint arXiv:1803.05457}.

\bibitem[{Dao et~al.(2022)Dao, Chen, Sohoni, Desai, Poli, Grogan, Liu, Rao, Rudra, and R{\'e}}]{dao2022monarch}
Tri Dao, Beidi Chen, Nimit~S Sohoni, Arjun Desai, Michael Poli, Jessica Grogan, Alexander Liu, Aniruddh Rao, Atri Rudra, and Christopher R{\'e}. 2022.
\newblock Monarch: Expressive structured matrices for efficient and accurate training.
\newblock In \emph{International Conference on Machine Learning}, pages 4690--4721. PMLR.

\bibitem[{Edalati et~al.(2021)Edalati, Tahaei, Rashid, Nia, Clark, and Rezagholizadeh}]{edalati2021kronecker_gpt}
Ali Edalati, Marzieh Tahaei, Ahmad Rashid, Vahid~Partovi Nia, James~J Clark, and Mehdi Rezagholizadeh. 2021.
\newblock Kronecker decomposition for gpt compression.
\newblock \emph{arXiv preprint arXiv:2110.08152}.

\bibitem[{Gao et~al.(2024{\natexlab{a}})Gao, Tow, Abbasi, Biderman, Black, DiPofi, Foster, Golding, Hsu, Le~Noac'h, Li, McDonell, Muennighoff, Ociepa, Phang, Reynolds, Schoelkopf, Skowron, Sutawika, Tang, Thite, Wang, Wang, and Zou}]{eval-harness}
Leo Gao, Jonathan Tow, Baber Abbasi, Stella Biderman, Sid Black, Anthony DiPofi, Charles Foster, Laurence Golding, Jeffrey Hsu, Alain Le~Noac'h, Haonan Li, Kyle McDonell, Niklas Muennighoff, Chris Ociepa, Jason Phang, Laria Reynolds, Hailey Schoelkopf, Aviya Skowron, Lintang Sutawika, Eric Tang, Anish Thite, Ben Wang, Kevin Wang, and Andy Zou. 2024{\natexlab{a}}.
\newblock \href {https://doi.org/10.5281/zenodo.12608602} {A framework for few-shot language model evaluation}.

\bibitem[{Gao et~al.(2024{\natexlab{b}})Gao, Lin, Hua, Tang, Shen, Jin, and Hsu}]{gao2024disp}
Shangqian Gao, Chi-Heng Lin, Ting Hua, Zheng Tang, Yilin Shen, Hongxia Jin, and Yen-Chang Hsu. 2024{\natexlab{b}}.
\newblock Disp-llm: Dimension-independent structural pruning for large language models.
\newblock \emph{Advances in Neural Information Processing Systems}, 37:72219--72244.

\bibitem[{Golub and Van~Loan(2013)}]{golub2013matrix}
Gene~H Golub and Charles~F Van~Loan. 2013.
\newblock \emph{Matrix computations}.
\newblock JHU press.

\bibitem[{Gorbunov et~al.(2024)Gorbunov, Yudin, Soboleva, Alanov, Naumov, and Rakhuba}]{group_and_shuffle}
Mikhail Gorbunov, Nikolay Yudin, Vera Soboleva, Aibek Alanov, Alexey Naumov, and Maxim Rakhuba. 2024.
\newblock \href {https://proceedings.neurips.cc/paper_files/paper/2024/file/7f0f24deb34c21ee590d8cece365710b-Paper-Conference.pdf} {Group and shuffle: Efficient structured orthogonal parametrization}.
\newblock In \emph{Advances in Neural Information Processing Systems}, volume~37, pages 68713--68739. Curran Associates, Inc.

\bibitem[{Hrinchuk et~al.(2020)Hrinchuk, Khrulkov, Mirvakhabova, Orlova, and Oseledets}]{hrinchuk2020tensorized}
Oleksii Hrinchuk, Valentin Khrulkov, Leyla Mirvakhabova, Elena Orlova, and Ivan Oseledets. 2020.
\newblock Tensorized embedding layers.
\newblock In \emph{Findings of the Association for Computational Linguistics: EMNLP 2020}, pages 4847--4860.

\bibitem[{Hsu et~al.(2022)Hsu, Hua, Chang, Lou, Shen, and Jin}]{hsu2022fwsvd}
Yen-Chang Hsu, Ting Hua, Sungen Chang, Qian Lou, Yilin Shen, and Hongxia Jin. 2022.
\newblock Language model compression with weighted low-rank factorization.
\newblock \emph{arXiv preprint arXiv:2207.00112}.

\bibitem[{Ji et~al.(2024)Ji, Xiang, Li, Xia, Ye, Duan, Wang, Chen, and Zhang}]{ji2024bayesian_svd}
Yixin Ji, Yang Xiang, Juntao Li, Qingrong Xia, Zi~Ye, Xinyu Duan, Zhefeng Wang, Kehai Chen, and Min Zhang. 2024.
\newblock Adaptive feature-based low-rank compression of large language models via bayesian optimization.
\newblock In \emph{Findings of the Association for Computational Linguistics: EMNLP 2024}, pages 4152--4168.

\bibitem[{Kostenetskiy et~al.(2021)Kostenetskiy, Chulkevich, and Kozyrev}]{kostenetskiy2021hpc}
PS~Kostenetskiy, RA~Chulkevich, and VI~Kozyrev. 2021.
\newblock {HPC} resources of the higher school of economics.
\newblock In \emph{Journal of Physics: Conference Series}, volume 1740, page 012050.

\bibitem[{Li et~al.(2023)Li, Yu, Zhang, Liang, He, Chen, and Zhao}]{li2023losparse}
Yixiao Li, Yifan Yu, Qingru Zhang, Chen Liang, Pengcheng He, Weizhu Chen, and Tuo Zhao. 2023.
\newblock Losparse: Structured compression of large language models based on low-rank and sparse approximation.
\newblock In \emph{International Conference on Machine Learning}, pages 20336--20350. PMLR.

\bibitem[{Lin et~al.(2024)Lin, Gao, Smith, Patel, Tuli, Shen, Jin, and Hsu}]{lin2024modegpt}
Chi-Heng Lin, Shangqian Gao, James~Seale Smith, Abhishek Patel, Shikhar Tuli, Yilin Shen, Hongxia Jin, and Yen-Chang Hsu. 2024.
\newblock Modegpt: Modular decomposition for large language model compression.
\newblock \emph{arXiv preprint arXiv:2408.09632}.

\bibitem[{Lioutas et~al.(2020)Lioutas, Rashid, Kumar, Haidar, and Rezagholizadeh}]{lioutas2020distilled}
Vasileios Lioutas, Ahmad Rashid, Krtin Kumar, Md~Akmal Haidar, and Mehdi Rezagholizadeh. 2020.
\newblock Improving word embedding factorization for compression using distilled nonlinear neural decomposition.
\newblock In \emph{Findings of the Association for Computational Linguistics: EMNLP 2020}, pages 2774--2784.

\bibitem[{Lissitz et~al.(1976)Lissitz, Sch{\"o}nemann, and Lingoes}]{lissitz1976solution}
Robert~W Lissitz, Peter~H Sch{\"o}nemann, and James~C Lingoes. 1976.
\newblock A solution to the weighted procrustes problem in which the transformation is in agreement with the loss function.
\newblock \emph{Psychometrika}, 41:547--550.

\bibitem[{Merity et~al.(2016)Merity, Xiong, Bradbury, and Socher}]{merity2016pointer}
Stephen Merity, Caiming Xiong, James Bradbury, and Richard Socher. 2016.
\newblock Pointer sentinel mixture models.
\newblock \emph{arXiv preprint arXiv:1609.07843}.

\bibitem[{Saha et~al.(2024)Saha, Sagan, Srivastava, Goldsmith, and Pilanci}]{saha2024caldera}
Rajarshi Saha, Naomi Sagan, Varun Srivastava, Andrea~J Goldsmith, and Mert Pilanci. 2024.
\newblock Compressing large language models using low rank and low precision decomposition.
\newblock \emph{arXiv preprint arXiv:2405.18886}.

\bibitem[{Sakaguchi et~al.(2021)Sakaguchi, Bras, Bhagavatula, and Choi}]{sakaguchi2021winogrande}
Keisuke Sakaguchi, Ronan~Le Bras, Chandra Bhagavatula, and Yejin Choi. 2021.
\newblock Winogrande: An adversarial winograd schema challenge at scale.
\newblock \emph{Communications of the ACM}, 64(9):99--106.

\bibitem[{Sch{\"o}nemann(1966)}]{schonemann1966generalized}
Peter~H Sch{\"o}nemann. 1966.
\newblock A generalized solution of the orthogonal procrustes problem.
\newblock \emph{Psychometrika}, 31(1):1--10.

\bibitem[{Sharma et~al.(2023)Sharma, Ash, and Misra}]{sharma2023laser}
Pratyusha Sharma, Jordan~T Ash, and Dipendra Misra. 2023.
\newblock The truth is in there: Improving reasoning in language models with layer-selective rank reduction.
\newblock \emph{arXiv preprint arXiv:2312.13558}.

\bibitem[{Song et~al.(2024)Song, Oh, Kim, Kim, Kim, and Kim}]{song2024sleb}
Jiwon Song, Kyungseok Oh, Taesu Kim, Hyungjun Kim, Yulhwa Kim, and Jae-Joon Kim. 2024.
\newblock Sleb: Streamlining llms through redundancy verification and elimination of transformer blocks.
\newblock \emph{arXiv preprint arXiv:2402.09025}.

\bibitem[{Tahaei et~al.(2021)Tahaei, Charlaix, Nia, Ghodsi, and Rezagholizadeh}]{tahaei2021kronecker_bert}
Marzieh~S Tahaei, Ella Charlaix, Vahid~Partovi Nia, Ali Ghodsi, and Mehdi Rezagholizadeh. 2021.
\newblock Kroneckerbert: Learning kronecker decomposition for pre-trained language models via knowledge distillation.
\newblock \emph{arXiv preprint arXiv:2109.06243}.

\bibitem[{Touvron et~al.(2023)Touvron, Lavril, Izacard, Martinet, Lachaux, Lacroix, Rozi{\`e}re, Goyal, Hambro, Azhar et~al.}]{touvron2023llama}
Hugo Touvron, Thibaut Lavril, Gautier Izacard, Xavier Martinet, Marie-Anne Lachaux, Timoth{\'e}e Lacroix, Baptiste Rozi{\`e}re, Naman Goyal, Eric Hambro, Faisal Azhar, et~al. 2023.
\newblock Llama: Open and efficient foundation language models.
\newblock \emph{arXiv preprint arXiv:2302.13971}.

\bibitem[{Wang et~al.(2024)Wang, Zheng, Wan, and Zhang}]{wang2024svd}
Xin Wang, Yu~Zheng, Zhongwei Wan, and Mi~Zhang. 2024.
\newblock Svd-llm: Truncation-aware singular value decomposition for large language model compression.
\newblock \emph{arXiv preprint arXiv:2403.07378}.

\bibitem[{Wolf(2020)}]{wolf2020transformers}
Thomas Wolf. 2020.
\newblock Transformers: State-of-the-art natural language processing.
\newblock \emph{arXiv preprint arXiv:1910.03771}.

\bibitem[{Xu et~al.(2023)Xu, Xu, and Mandic}]{xu2023tensorgpt}
Mingxue Xu, Yao~Lei Xu, and Danilo~P Mandic. 2023.
\newblock Tensorgpt: Efficient compression of the embedding layer in llms based on the tensor-train decomposition.
\newblock \emph{arXiv preprint arXiv:2307.00526}.

\bibitem[{Yu and Wu(2023)}]{yu2023features_low_rank}
Hao Yu and Jianxin Wu. 2023.
\newblock Compressing transformers: features are low-rank, but weights are not!
\newblock In \emph{Proceedings of the AAAI Conference on Artificial Intelligence}, volume~37, pages 11007--11015.

\bibitem[{Yuan et~al.(2023)Yuan, Shang, Song, Wu, Yan, and Sun}]{yuan2023asvd}
Zhihang Yuan, Yuzhang Shang, Yue Song, Qiang Wu, Yan Yan, and Guangyu Sun. 2023.
\newblock Asvd: Activation-aware singular value decomposition for compressing large language models.
\newblock \emph{arXiv preprint arXiv:2312.05821}.

\bibitem[{Zellers et~al.(2019)Zellers, Holtzman, Bisk, Farhadi, and Choi}]{zellers2019hellaswag}
Rowan Zellers, Ari Holtzman, Yonatan Bisk, Ali Farhadi, and Yejin Choi. 2019.
\newblock Hellaswag: Can a machine really finish your sentence?
\newblock \emph{arXiv preprint arXiv:1905.07830}.

\bibitem[{Zhang et~al.(2022)Zhang, Roller, Goyal, Artetxe, Chen, Chen, Dewan, Diab, Li, Lin et~al.}]{zhang2022opt}
Susan Zhang, Stephen Roller, Naman Goyal, Mikel Artetxe, Moya Chen, Shuohui Chen, Christopher Dewan, Mona Diab, Xian Li, Xi~Victoria Lin, et~al. 2022.
\newblock Opt: Open pre-trained transformer language models.
\newblock \emph{arXiv preprint arXiv:2205.01068}.

\end{thebibliography}
\end{document}